\theoremstyle{plain}
\newtheorem{proposition}{Proposition}
\newtheorem{definition}{Definition}
\newcommand\quotes[1]{``{#1}''}
\newcommand\expone{\ding{172}}
\newcommand\exptwo{\ding{173}}
\newcommand\expthree{\ding{174}}
\newcommand\expfour{\ding{175}}
\newcommand{\Dunlabeled}{D_u}
\DeclareMathOperator*{\argmin}{argmin}
\newcommand*{\img}[1]{%
    \raisebox{-.15\baselineskip}{%
        \includegraphics[
        height=\baselineskip,
        width=\baselineskip,
        keepaspectratio,
        ]{#1}%
    }%
}
\newtcolorbox{takeawaybox}{enhanced,
  colback=pink!20!gray!30!white, 
  colframe=pink!75!gray, 
}
\let\svthefootnote\thefootnote
\newcommand\freefootnote[1]{%
  \let\thefootnote\relax%
  \footnotetext{#1}%
  \let\thefootnote\svthefootnote%
}
\renewcommand{\H}{\mathcal{H}}
\newcommand{\ood}{_{\mathrm{ood}}}
\newcommand{\spurious}{_{\mathrm{sp}}}
\newcommand{\ERM}{_{\mathrm{ERM}}}
\newcommand{\aggloss}{\mathnormal{A}}
\newcommand{\learningalgo}{\mathcal{A}}
\title{Unraveling the Key Components of \\ OOD Generalization via Diversification}
\author{Harold Benoit$^{*,1,2}$\And Liangze Jiang$^{*,1}$\And Andrei Atanov$^{*,1}$\AND O\u{g}uzhan Fatih Kar$^{1}$\And Mattia Rigotti$^{2}$\And Amir Zamir$^{1}$\AND \normalfont{$^1$Swiss Federal Institute of Technology (EPFL)}\And \normalfont{$^2$IBM Research}}
\begin{document}

\maketitle

\begin{abstract}


Supervised learning datasets may contain multiple cues that explain the training set equally well, i.e., learning any of them would lead to the correct predictions on the training data. However, many of them can be spurious, i.e., lose their predictive power under a distribution shift and consequently fail to generalize to out-of-distribution (OOD) data. Recently developed "diversification" methods~\citep{lee_diversify_2023,pagliardini_agree_2023} approach this problem by finding multiple diverse hypotheses that rely on different features. This paper aims to study this class of methods and identify the key components contributing to their OOD generalization abilities. We show that (1) diversification methods are highly sensitive to the distribution of the unlabeled data used for diversification and can underperform significantly when away from a method-specific sweet spot. (2) Diversification alone is insufficient for OOD generalization. The choice of the used learning algorithm, e.g., the model's architecture and pretraining, is crucial. In standard experiments (classification on Waterbirds and Office-Home datasets), using the second-best choice leads to an up to 20\% absolute drop in accuracy. (3) The optimal choice of learning algorithm depends on the unlabeled data and vice versa, i.e., they are co-dependent. (4) Finally, we show that, in practice, the above pitfalls cannot be alleviated by increasing the number of diverse hypotheses, the major feature of diversification methods. These findings provide a clearer understanding of the critical design factors influencing the OOD generalization abilities of diversification methods. They can guide practitioners in how to use the existing methods best and guide researchers in developing new, better ones. \freefootnote{$^*$Equal contribution. Corresponding author: harold.benoit@alumni.epfl.ch}

\end{abstract}


\section{Introduction}

Achieving out-of-distribution (OOD) generalization is a crucial milestone for the real-world deployment of machine learning models. 
A core obstacle in this direction is the presence of \textit{spurious features}, i.e., features that are predictive of the true label on the training data distribution but fail under a distribution shift.
They may appear due to, for example, a bias in the data acquisition process~\citep{oakden-rayner_hidden_2020}) or an environmental cue closely related to the true predictive feature~\citep{beery_recognition_2018}.

The presence of a \textit{spurious correlation} between spurious features and true underlying labels implies that there are multiple hypotheses (i.e., labeling functions)
that {all} describe training data equally well, i.e., have a low training error, but only some generalize to the OOD test data.
Previous works~\citep{atanov_task_2022, battaglia_relational_2018, shah_pitfalls_2020}
have shown that in the presence of multiple predictive features, standard empirical risk minimization~\citep{vapnik_principles_1991} (ERM) using neural networks trained with stochastic gradient descent (SGD) converges to a hypothesis that is most aligned with the learning algorithm's inductive biases.
When these inductive biases are not aligned well with the true underlying predictive feature, it can cause ERM to choose a wrong (spurious) feature and, consequently, fail under a distribution shift.

Recently, \textit{diversification methods} \citep{lee_diversify_2023,pagliardini_agree_2023} have achieved state-of-the-art results in classification settings in the presence of spurious correlations. Instead of training a single model, these methods aim to find multiple \textit{plausible and diverse} hypotheses that all describe the training data well, while relying on different predictive features, which is usually done by promoting different predictions on additional unlabeled data.
The motivation is that among all the found features, there will be the true predictive one that is causally linked to the label and, therefore, remains predictive under a distribution shift.

In this work, we identify and study the key factors that contribute to the success of these diversification methods, adopting \citep{lee_diversify_2023, pagliardini_agree_2023} as two recently proposed best-performing representative methods. Our contributions are as follows.

\setlist{nolistsep,leftmargin=15pt}
\begin{itemize}[topsep=-2pt,itemsep=2pt,labelindent=*,labelsep=7pt,leftmargin=15pt]
    \item

    First, through theoretical and empirical analyses, we show that diversification methods are \textit{sensitive to the distribution of the unlabeled data} (Fig.~\ref{fig:pull_figure_original} vs. \ref{fig:pull_figure_data}).
    Specifically, each diversification method works best for different distributions of unlabeled data, and the performance drops significantly (up to 30\% absolute accuracy) when diverging from the optimal distribution.
    \item 
    Second, we demonstrate that \textit{diversification alone cannot lead to OOD generalization efficiently without additional biases}.
    This is similar to the in-distribution generalization with ERM, where \quotes{good} learning algorithm's inductive biases are necessary for generalization \citep{vapnik_uniform_2015}.
    In particular, we show that \textit{these methods are sensitive to the choice of the architecture and pretraining method} (Fig.~\ref{fig:pull_figure_original} vs. \ref{fig:pull_figure_inductive_bias}),
    and the deviation from best to second best model choice results in a significant (up to 20\% absolute) accuracy drop (see Fig.~\ref{fig:comparing_pretraining}).

    \item  Further, we show that a \textit{co-dependence exists between unlabeled data and the learning algorithm}, i.e., the optimal choice for one depends on the other. 
    Specifically, for fixed training data, we can change unlabeled data in a targeted way to make one architecture (e.g., MLP) generalize and the other (e.g., ResNet18) to have random guess test accuracy and vice versa.

    \item Finally, we show that one of the expected advantages of diversification methods -- increasing the number of diverse hypotheses to improve OOD generalization -- does not hold up in practice and does not help to alleviate the aforementioned pitfalls.
    Specifically, we do not observe any meaningful improvements using more than two hypotheses.

\end{itemize}
These findings provide a clearer understanding of the relevant design factors influencing the OOD generalization of diversification methods. They can guide practitioners in how to best use the existing methods and guide researchers in developing new, better ones. We provide guiding principles distilled from our study in each section and Sec.~\ref{sec:conclusion}.

\begin{figure}[!t]
	\centering
        \vspace{-2em}
%
\caption{ \textbf{Increasing the number of hypotheses does not bridge the performance gap between different models.}
We increase the number of hypotheses found by diversification methods for the second-best model in Fig.~\ref{fig:comparing_pretraining} and find that it is not enough to bridge the performance gap with the best-performing model. The best hypothesis accuracy is reported. Results are averaged over 3 seeds. Standard deviations for Waterbirds-CC are larger due to the usage of the worst-group accuracy metric.
}
\label{tab:second_best_mult_hyp_results_best_hypothesis}
\end{table}

\textbf{Increasing $K$ does not improve performance.}
Finally, we study whether the performance gap between the best and second-best models tested in Fig.~\ref{fig:comparing_pretraining} can be closed by increasing the number of hypotheses $K$, as this is allegedly the major feature and motivation of diversification methods.
Tab.~\ref{tab:second_best_mult_hyp_results_best_hypothesis} shows that similar to the observation made in Sec.~\ref{sec:verify_on_image}, increasing $K$ does not bring any improvements, suggesting that the choice of the model is more important for enabling OOD generalization.
In Fig.~\ref{fig:naive_scaling_divdis}, we further show that  DivDis does not scale well to larger $K$ (e.g., $K=64$) ``out-of-the-box'', and the performance drops as the number of hypotheses increases.
Note that testing D-BAT in this regime would be prohibitively expensive.

\begin{takeawaybox}
    \textbf{Takeaway.} Diversification methods are highly sensitive to the choice of the learning algorithm, e.g., architecture and pretraining method. 
    The ``built-in'' mechanism of increasing the number of hypotheses $K$ does not alleviate this issue and fails to improve performance.

\end{takeawaybox}


\subsection{On the Co-Dependence between Learning Algorithm and Unlabeled Data}
\label{sec:effects_of_arch_biases}

\vspace{-3pt}

Sec.~\ref{sec:intro-spurious_ratio} and Sec.~\ref{sec:removing_pretraining} show the sensitivity of the diversification methods to the distribution of the unlabeled data and the choice of a learning algorithm, respectively.
Here, we further demonstrate that these choices are \textit{co-dependent}, i.e., the optimal choice for one depends on the other.
Specifically, we show that by only varying the distribution of unlabeled data, the optimal architecture can be changed.


\textbf{Experimental setup.}
We consider two learning algorithms (architectures) $\learningalgo  \in \{\mathrm{MLP}, \mathrm{ResNet18}\}$ (extension to other architectures is straightforward) and construct examples for D-BAT where one architecture outperforms the other and vice versa.
To do that we build on the idea of adversarial splits introduced in \citep{atanov_task_2022}, defined on a CIFAR-10 \cite{krizhevsky_learning_2009} dataset $D$.
Below, we briefly describe the construction and refer the reader to Appendix~\ref{appendix:steering_ib_data} for more details.

We start by considering two hypotheses with high agreement scores \citep{baek_agreement---line_2022} found by \cite{atanov_task_2022} for each architecture, such that the following holds:
\vspace{-1pt}
\begin{equation}
\label{eq:steering_condition}
    \mathrm{AS}_{\mathrm{MLP}}(h_\mathrm{MLP}) > \mathrm{AS}_{\mathrm{MLP}}(h_\mathrm{RN}), \quad \mathrm{AS}_{\mathrm{RN}}(h_\mathrm{RN}) > \mathrm{AS}_{\mathrm{RN}}(h_\mathrm{MLP}),
\end{equation}
where $\mathrm{AS}_{\learningalgo}$ stands for the agreement score measured with algorithm $\learningalgo$.
As shown in \citet{atanov_task_2022}, the above inequalities suggest that each hypothesis $h_\learningalgo$ is more aligned with its corresponding learning algorithm $\learningalgo$, i.e., ERM trained with $\mathrm{MLP}$ architecture will preferentially converge to $h_\mathrm{MLP}$ over $h_\mathrm{RN}$ and vice-versa when training with $\mathrm{Resnet18}$. 
Akin to adversarial splits~\citep{atanov_task_2022}, we then use these two high-AS hypotheses to construct a dataset to change, in a \textit{targeted} way, what the first hypothesis of D-BAT $h_1 \triangleq h_\text{ERM}$ converges to, depending on the used learning algorithm.
Different $h_1$s, in turn, lead to different $h_2$s and, hence, different test performance.

As the true labeling $h^*$, we use a binary classification task constructed by splitting the original 10 classes into two sets of five.
Then, as Tab.~\ref{tab:steering}-Right illustrates, we construct training data $D_t$ to contain samples where all $h^*$, $h_\mathrm{MLP}$, and $h_\mathrm{RN}$ agree, i.e., $D_{t} = \{ x \in D: h^*(x) = h_\mathrm{MLP}(x) = h_\mathrm{RN}(x)\}$. Thus, by design, both $h_\mathrm{MLP}$ and $h_\mathrm{RN}$ are completely spuriously correlated with $h^*$. 
Then, we define unlabeled OOD data $ \Dunlabeled$ s.t. either $(r^{h_\mathrm{MLP}}_{\Dunlabeled} = 0, r^{h_\mathrm{RN}}_{\Dunlabeled} = \frac{1}{2})$ (denoted as $h^* \perp h_\mathrm{RN}$), 
 or $(r^{h_\mathrm{MLP}}_{\Dunlabeled} = \frac{1}{2}, r^{h_\mathrm{RN}}_{\Dunlabeled} = 0)$ (denoted as $h^* \perp h_\mathrm{MLP}$).
 This means that $h^*$ is inversely correlated with only one of $h_\mathrm{MLP}$ or $h_\mathrm{RN}$, while not correlated to the other hypothesis. 

\textbf{Results.}
Keeping the training data fixed, we train D-BAT ($K=2$) using different architecture and construct unlabeled data pairs $(\learningalgo, \Dunlabeled)$.
Tab.~\ref{tab:steering}-Left shows that the performance of $\learningalgo$ drops to almost random chance when $h_\learningalgo$ does not inversely correlate with $h^*$ on the unlabelled data (\exptwo~and \expthree).
This is consistent with Sec.~\ref{sec:intro-spurious_ratio}, where we show that the setting with $r^{h_1}_{\Dunlabeled} = \frac{1}{2}$ is disadvantageous for D-BAT. In Appendix~\ref{appendix:steering_ib_data}, Tab.~\ref{tab:steering_vit} further shows similar observation for a different architecture pair (ViT \& ResNet18), and Fig.~\ref{fig:smooth_transition} extends the experiment with smooth interpolation from one unlabeled dataset setting to the other, showing a linear transition where one architecture goes from optimal performance to random-chance accuracy, and vice-versa.

\begin{takeawaybox}
\textbf{Takeaway.} 
The optimal choices of the architecture and unlabeled data are co-dependent.
\end{takeawaybox}


\begin{table}[t]
  \centering
  \begin{minipage}[!t]{\textwidth} %
  \vspace{-1.5em}
  \centering
  \begin{minipage}[!t]{0.40\textwidth} %
    \centering
    \resizebox{\textwidth}{!}{%

    \begin{tabular}{lccc}
        \toprule
        $\Dunlabeled$ & & $\learningalgo$ & Test Acc.(\%) \\ \midrule
        \multirow{2}{*}{$h^\star \perp h_\mathrm{MLP}$} & \expone & MLP & 89.2$_{\pm 0.8}$             \\
        & \exptwo & ResNet18  & 56.7$_{\pm 0.8}$           \\\midrule
        \multirow{2}{*}{$h^\star \perp h_\mathrm{RN}$} & \expthree & MLP  & 55.4$_{\pm 0.3}$            \\
        & \expfour & ResNet18  & 77.0$_{\pm 0.7}$         \\\bottomrule
    \end{tabular}
    }
    \end{minipage}
    \hspace{0.5em}
    \begin{minipage}[!t]{0.28\textwidth} %
    \vspace{1.5em}
    \includegraphics[width=\textwidth]{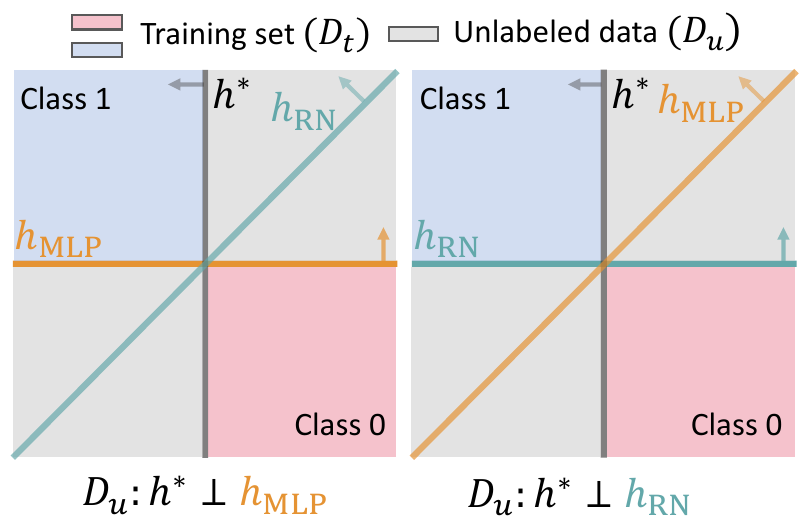} 
    \label{tab:steering_data_results}
    \end{minipage}
    \vspace{-1.5em}
    \caption{\textbf{The optimal choices of the (learning algorithm) architecture and unlabeled data are co-dependent.} \textbf{Left}: Performance of the D-BAT method using two architectures and two created unlabeled data distributions.
    The test accuracy is on hold-out data $D\ood \sim \Dunlabeled$.
    \textbf{Right}: Illustration of the two unlabeled data distributions used in the experiment.
    We keep the training data fixed and change $\Dunlabeled$ s.t. $h^\star$ is inversely correlated (denoted by `$\perp$') to one of the ``spurious'' hypotheses $h_\mathrm{MLP}, h_\mathrm{RN}$.
    \textit{We show that the optimal choice of the architecture ($\learningalgo$) depends on the unlabeled data distribution}, and the best $\learningalgo$ is the one for which $h^* \perp h_\learningalgo$ on $\Dunlabeled$.}
    \label{tab:steering}
  \end{minipage}

\end{table}

\section{Conclusion and Limitations}
\label{sec:conclusion}

This paper aims to study diversification methods and identify key components enabling their OOD generalization: the diversification loss used, the distribution of the unlabeled data, and the choice of a learning algorithm. 
Below, we distill some practical recommendations that follow from our analysis.

\textbf{Unlabeled data and diversification loss.} Sec.~\ref{sec:intro-spurious_ratio} shows that a sub-optimal spurious ratio w.r.t to the chosen diversification loss may lead to significant performance drops. One possibility to overcome this problem is to use a mixture of diversification losses, determined by an estimate of the spurious ratio of unlabeled data. Another is to try to collect unlabeled data with a specific spurious ratio.

\textbf{Choice of the learning algorithm.} Sec.~\ref{sec:removing_pretraining} demonstrates that the methods are highly sensitive to the choice of the learning algorithm inductive bias. Future methods should be made more resilient to this choice, e.g., by modeling each hypothesis with different architectures and pretraining methods or by implementing a mechanism to choose a \quotes{good} model automatically.

\textbf{Co-dependence.} Sec.~\ref{sec:effects_of_arch_biases} suggests that a practitioner should not expect the best learning algorithm (e.g., architecture or pretraining choice) found on one dataset to perform well on another one (as observed in Sec.~\ref{sec:removing_pretraining}), and an additional search might be needed to achieve good performance.

Then we discuss the limitations of our study:

\textbf{Data characteristics.}
We characterize the influence of the OOD data distribution through its spurious ratio. 
The influence of other important properties of the OOD data may need to be studied in future work.
Furthermore, we mainly focused on image data to aid the comparison with \citet{lee_diversify_2023,pagliardini_agree_2023}, but we expect our conclusions to be mainly data-agnostic.

\textbf{Co-dependence experiment only with D-BAT} In Sec.~\ref{sec:effects_of_arch_biases}, the experiment is only performed with D-BAT. We expect DivDis to have a similar co-dependence. However, its diversification loss (mutual information) and optimization strategy (simultaneous) make such a targeted experiment challenging to design. We leave an explicit demonstration for future work.

\clearpage

\textsc{\large {Reproducibility Statement}}

In order to ensure that this work is reproducible, we have taken the following steps. In Appendix~\ref{appendix:ood_balance_proof} and \ref{appendix:diversification_only_proofs}, we provide proofs for each theoretical result (Proposition \ref{proposition:div_loss} and Proposition \ref{prop:nflt-div}). For the experiments, in Appendix~\ref{appendix:ood_balance_real_data_details},        \ref{appendix:empirical-exp-details}, and \ref{appendix:steering_ib_data}, we provide a complete description of the datasets, used models, and hyper-parameter settings. Additionally, all results from DivDis~\citep{lee_diversify_2023} and D-BAT~\citep{pagliardini_agree_2023} are obtained using their respective published source code, ensuring a faithful representation of their methods. Finally, we provide the anonymized source code for the experiments performed in the paper.

\bibliography{references}

\clearpage

\appendix 
\part*{\LARGE Appendix}
The appendix of this work is outlined as follows:

\begin{itemize}[topsep=-2pt,itemsep=4pt,labelindent=*,labelsep=7pt,leftmargin=15pt]
\vspace{0.5em} 
\item Appendix~\ref{appendix:ood_balance_proof} proves Proposition~\ref{proposition:div_loss} of Sec.~\ref{sec:synthetic_exampe_theory_and_empirical} (synthetic 2D task), and \textbf{shows that the optimal diversification loss depends on the spurious ratio of the unlabeled data}.

\item Appendix~\ref{appendix:2d_mlp} extends the experiment done in Sec.~\ref{sec:synthetic_exampe_theory_and_empirical} (synthetic 2D task) by training a multilayer perceptron (MLP) instead of a linear classifier, and \textbf{shows empirically that Proposition~\ref{proposition:div_loss} extends to more complex classifiers.}

\item Appendix~\ref{appendix:2d_divdis} provides additional experiments for Sec.~\ref{sec:synthetic_exampe_theory_and_empirical}, and \textbf{shows empirically that Proposition~\ref{proposition:div_loss} extends to DivDis}.

\item Appendix~\ref{appendix:ood_balance_real_data_details} provides the implementation details of the experimental verification  of Proposition~\ref{proposition:div_loss}  on real-world images (Sec.~\ref{sec:verify_on_image}) \textbf{We also provide additional results, using the M/F dataset} (where MNIST and Fashion-MNIST \citep{xiao_fashion-mnist_2017} are concatenated), \textbf{as well as the CelebA \citep{liu_deep_2015} dataset}. We also show that \textbf{tuning the diversification hyperparameter $\alpha$ is \textit{not} sufficient to compensate the performance loss from the misalignment between unlabeled data and diversification loss}, i.e., the conclusion of Proposition~\ref{proposition:div_loss} still holds when tuning $\alpha$.

\item Appendix~\ref{appendix:diversification_only_proofs} proves Proposition \ref{prop:nflt-div} of Sec.~\ref{sec:diversification-only}, \textbf{proving the existence of a large number of pairwise diverse hypotheses which do not generalize}. A proof for a similar result in the multi-class classification case is also provided.

\item Appendix~\ref{app:high_as_discovered_tasks} provides \textbf{an overview of the important concepts from Task Discovery}~\citep{atanov_task_2022} used in this paper (agreement score, adversarial splits). 

\item Appendix~\ref{app:as_landscape_data}, using agreement score, explains the experimental setup and results that demonstrate that \textbf{D-BAT and DivDis find hypotheses that are not only diverse but aligned with the inductive bias of the used learning algorithm.}

\item Appendix~\ref{appendix:empirical-exp-details} reports the experimental details and full results of Sec.~\ref{sec:removing_pretraining}.

\item Appendix~\ref{appendix:steering_ib_data} provides a detailed explanation of how to construct the training and unlabeled data of \ref{sec:effects_of_arch_biases} where \textbf{we show that by only changing the distribution of unlabeled data, we can influence the optimal choice of the architecture.} It also contains a variant of Tab.~\ref{tab:steering} with ViT\&ResNet pair, as well as an extension of the experiment \textbf{with a smooth interpolation from one unlabeled dataset setting to the other, showing a linear transition where one architecture goes from the optimal performance to random-chance accuracy, and vice-versa.}

\end{itemize}

\vspace{3em}
\section{Proof and Discussion of Proposition~\ref{proposition:div_loss}} 
\label{appendix:ood_balance_proof}

In Sec.~\ref{sec:intro-spurious_ratio}, we make a proposition that, in the synthetic 2D example, the optimal choice of diversification loss changes with the spurious ratio of unlabeled OOD data $r_{\Dunlabeled}$. 
Specifically, DivDis-Seq finds the ground truth hypothesis $h^\star$ if and only if $r_{\Dunlabeled}=0.5$ (i.e., balanced or no spurious correlation), whereas D-BAT discovers $h^\star$ if and only if $r_{\Dunlabeled}=0$ (i.e., inversely correlated).
In this section, we provide the proof, method by method, and case by case.

We first restate the Proposition~\ref{proposition:div_loss} as follows:

\vspace{0.5em}
\textbf{Synthetic 2D Binary Classification Task.}
We illustrate the setting in Fig.~\ref{fig:2d_task_app} and describe it below:
\begin{itemize}[topsep=-2pt,itemsep=4pt,labelindent=*,labelsep=7pt,leftmargin=15pt]
\item The data domain spans a 2D square, i.e., $\{ x =(x_1,x_2) \in [-1,1]^2 \}$. 
\item The training distribution is defined as $D_t=\{x=(x_1,x_2) \in \{[-1,0]\cup[0,1]\} \cup \{[0,1]\cup[-1,0]\} \}$, i.e., contains data points the 1st and 4th quadrants.
\item Our hypothesis space $\H$ contains all possible linear classifiers $h(x;\beta)$ where $\beta$ is the radian of the classification plane w.r.t horizontal axis $x_1$.
\item The ground truth hypothesis is $h^\star(x) = h(x;\frac{\pi}{2}) = \mathcal{I}\{x_1 > 0\}$, where $\mathcal{I}$ is the indicator function.
\item The spurious hypothesis, i.e. the one that ERM converges to, is assumed to be $h\spurious(x) = h(x;0) = \mathcal{I}\{x_2 < 0\}$.
\item Thus, $h\spurious$ and $h^\star$ agree on the training data (1st and 4th quadrants) and disagree on the 2nd and 3rd quadrants. 

\item We vary the spurious ratio of the unlabeled OOD data distribution $\Dunlabeled$ by varying the ratio of data points sampled from the 1st and 4th quadrants over the number of data points sampled from the 2nd and 3rd quadrants. 

\item One possibility is to define $\Dunlabeled=\{x=(x_1,x_2) \in \{[R(r_{\Dunlabeled}),1]\cup[0,1]\} \cup \{[-1,-R(r_{\Dunlabeled})]\cup[-1,0]\} \}$, and $R(r)=\frac{r}{r-1}$ for $0 \le r \le 0.5$.

\item Let $P_{h}(x;y)$ be the probability of class $y$ predicted by hypothesis $h$ given sample $x$. The following proof assumes both the hypotheses $h\spurious$ and the second hypothesis $h_2^{DB}$ or $h_2^{DD}$ discovered by D-BAT and DivDis-Seq have a hard margin, i.e., $P_{h}(x;y) \in \{0,1\}$. 
Nonetheless, we also show empirically in Sec.~\ref{sec:synthetic_exampe_theory_and_empirical} (Fig.~\ref{fig:ood_balance}) that when this hard margin condition does not hold, we get the same conclusion as Proposition~\ref{proposition:div_loss}.

\end{itemize}


\begin{figure}[t]
	\centering
        \vspace{0em}

\vspace{1em}
\caption{\textbf{LeNet's parameters in Sec.~\ref{sec:verify_on_image}}. For Conv layers, the numbers represent the input channel, output channel and kernel size. For linear layers, the numbers are input and output sizes, respectively.}
\vspace{-1em}
\label{tab:lenet}
\end{table}


\textbf{Results on M/F dataset.} In the same manner of Fig.\ref{fig:ood_balance}, we show results on M/F dataset in Fig.~\ref{fig:mf_ood_balance}-Right. 
We see a similar trend as Fig.\ref{fig:ood_balance}:
\begin{itemize}[topsep=-2pt,itemsep=4pt,labelindent=*,labelsep=7pt,leftmargin=15pt]
\item When $r_{\Dunlabeled} \in [0, 0.5]$, (inversely correlated to balanced), the results match our observations made in .

\item When $r_{\Dunlabeled} \in [0.5, 1.0]$ (balanced to completely spurious), both on M/C and M/F, all methods have more and more difficulty to diversify and use the semantic features. Indeed, the unlabeled OOD data distribution $\Dunlabeled$ gets increasingly closer to the training distribution $D_t$, thus we cannot expect OOD generalization.
\end{itemize}

Overall the synthetic 2D binary task section, M/C, and M/F experiments suggest that, in practice, across different datasets,  diversification methods' behavior and solutions are highly dependent on the spurious ratio of unlabeled OOD data.

\begin{figure}[t]
	\centering
        \vspace{0em}
        \includegraphics[width=0.43\textwidth]{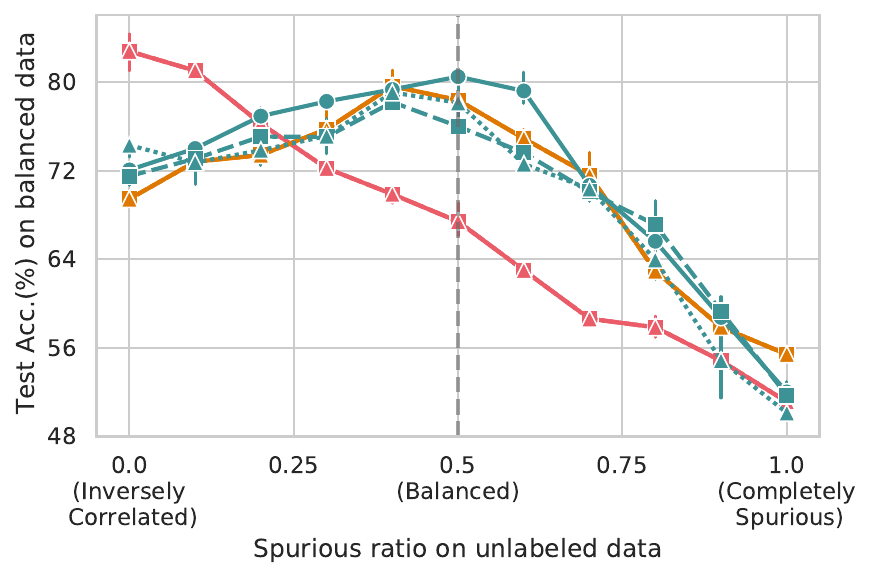}
	   \includegraphics[width=0.53\textwidth]{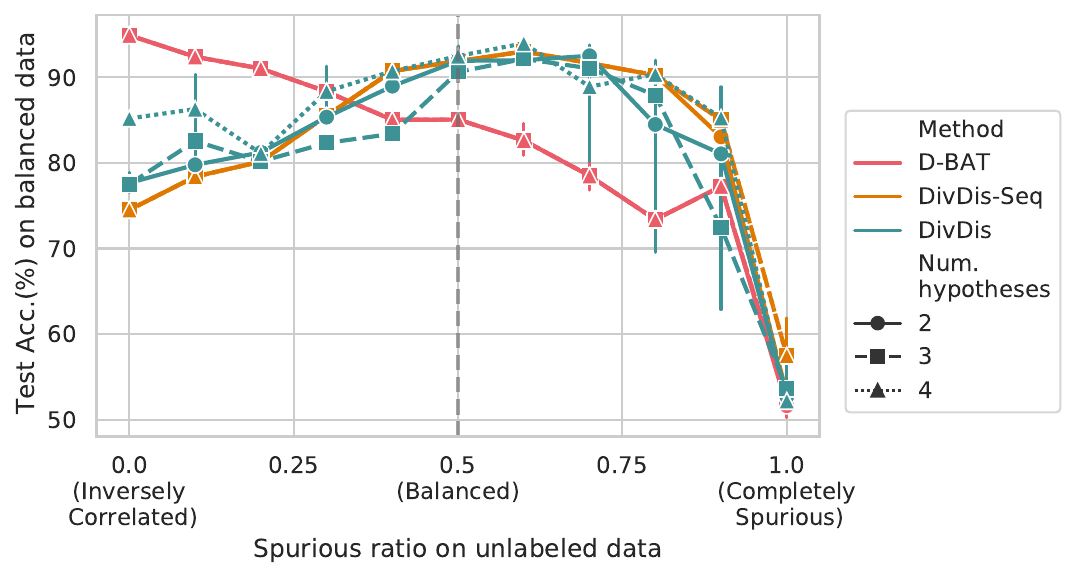}
        \vspace{-0.2em}
	\caption{\textbf{Performance of diversification is highly dependent on unlabeled OOD data (M/C and M/F datasets).} The test accuracy is measured on balanced data $D\ood$ (i.e. $r_{D\ood} = 0.5$, no spurious correlation). \textbf{Left}: Test accuracy of D-BAT \& DivDis(-Seq) on MNIST/CIFAR-10 for varied spurious ratios $r_{\Dunlabeled}$. \textbf{Right}: Test accuracy of D-BAT \& DivDis(-Seq) on MNIST/Fashion-MNIST for varied spurious ratios $r_{\Dunlabeled}$.
	}
	\label{fig:mf_ood_balance}
        \vspace{-1.25em}
\end{figure}

\textbf{Discussion on the $\alpha$ hyperparameter.} In the above experiments on both datasets, we use large coefficients $\alpha$ for diversification losses ($A_{\Dunlabeled}$ in Eq.~\ref{eq:div_opj}) as 5 / 50 / 50 for D-BAT / DivDis / DivDis-Seq, in order to study the behavior of these methods when the diversity objective is fully optimized.

In Fig.~\ref{fig:smaller_alpha}-Left, we further show results for different values of $\alpha$. 
We observe that tuning $\alpha$ is \textit{not} sufficient to compensate for the misalignment between the unlabeled OOD data and the diversification loss, and the performance for both methods has the same trend.
Specifically, larger $\alpha$ gives better test accuracy in general, as shown in Fig.~\ref{fig:smaller_alpha}-Left. 
In Fig.~\ref{fig:smaller_alpha}-Right, we select the best $\alpha$ for each scenario (i.e., each spurious ratio of unlabeled OOD data), and observe no meaningful difference in behavior (compared to Fig.~\ref{fig:ood_balance} and Fig.~\ref{fig:mf_ood_balance}).
Therefore, a conclusion similar to Proposition~\ref{proposition:div_loss} still holds: even when tuning $\alpha$ for each unlabeled OOD data setting (i.e. spurious ratio), D-BAT performs best when the unlabeled data is inversely correlated, while DivDis performs best when the unlabeled data is balanced. 
This suggests that a practitioner might not be able to compensate for a misalignment between unlabeled data and diversification loss by tuning the hyperparameter $\alpha$.

\textbf{Results on CelebA-CC dataset.} In Tab.~\ref{fig:ood_balance_celeba}, we further show results on a large-scale real-world dataset, namely CelebA-CC \citep{liu_deep_2015, lee_diversify_2023}. CelebA-CC is a variant of CelebA, introduced by \citep{lee_diversify_2023}, where the training data semantic attribute is completely correlated with the spurious attribute.
Here, gender is used as the spurious attribute and hair color as the target.
We take D-BAT and DivDis-Seq (for fair comparison on sequential training), and show their test accuracy on different degrees of spurious ratio of unlabeled OOD data ($r_{\Dunlabeled}=\{0.0, 0.5, 1.0\}$). 
Consistent with our previous observations, the results show that D-BAT performs the best when $r_{\Dunlabeled}=0.0$, and DivDis-Seq performs the best when $r_{\Dunlabeled}=0.5$. 

\begin{figure}[!t]
	\centering
        \vspace{1em}
	   \includegraphics[width=0.53\textwidth]{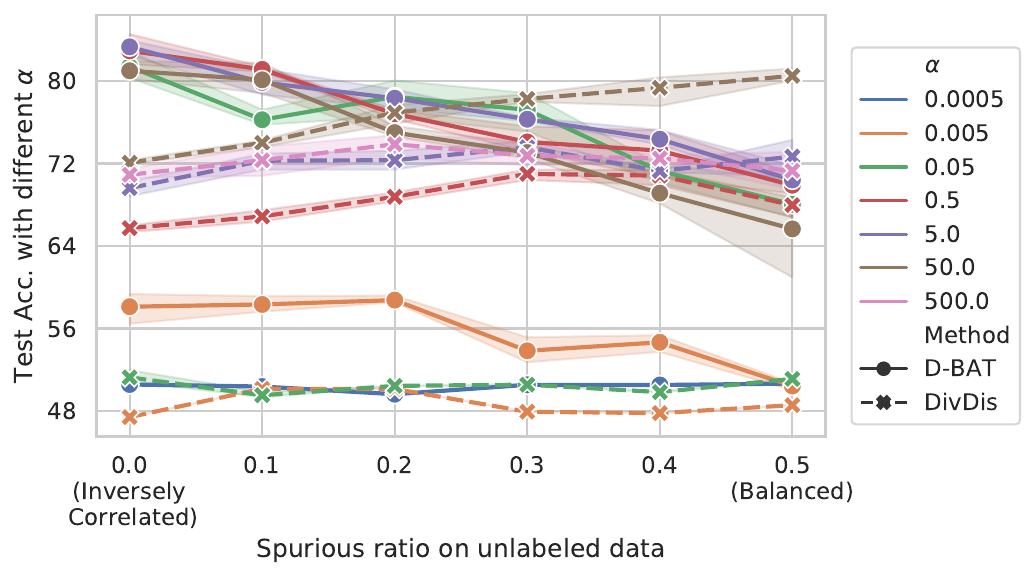} 
	   \includegraphics[width=0.44\textwidth]{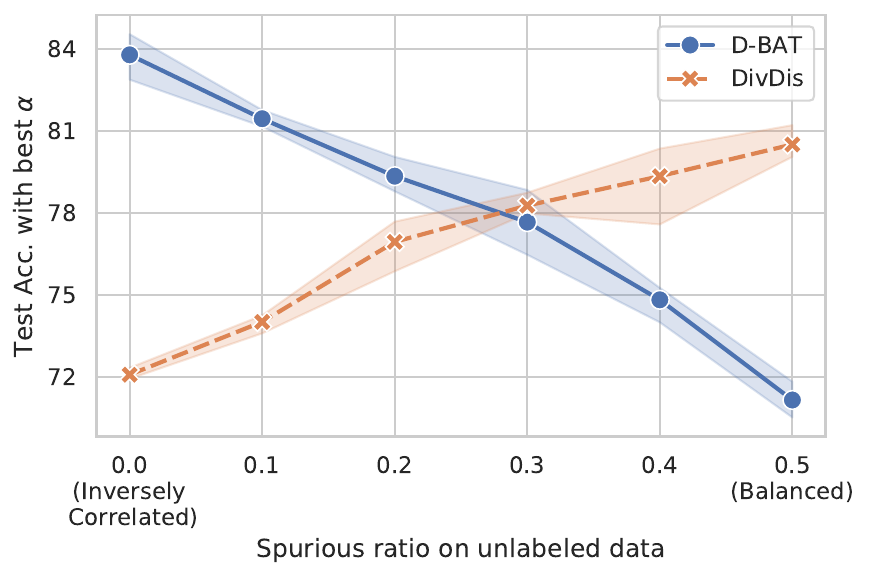} 
        \vspace{0em}
	\caption{\textbf{Tuning $\alpha$ is \textit{not} sufficient to compensate for the change in performance when the spurious ratio of unlabeled OOD data changes.} The test accuracy is measured on balanced data $D\ood$ of MNIST/CIFAR-10. \textbf{Left}: Test accuracy of D-BAT and DivDis with varied spurious ratios of unlabeled OOD data, where various $\alpha$ are considered. \textbf{Right}: Test accuracy with the best $\alpha$ for D-BAT and DivDis, where for each spurious ratio, the best $\alpha$ is selected w.r.t the accuracy on a hold-out balanced validation data.
	}
	\label{fig:smaller_alpha}
        \vspace{1.25em}
\end{figure}

\begin{table}[!t]
\centering
\resizebox{0.5\textwidth}{!} {%
\begin{tabular}{cccc}
        \toprule
         & $r_{\Dunlabeled}=0.0$ & $r_{\Dunlabeled}=0.5$ & $r_{\Dunlabeled}=1.0$ \\ \midrule
        D-BAT & 84.6$_{\pm 0.3}$ & 82.8$_{\pm 0.2}$ & 74.6$_{\pm 0.6}$           \\
        DivDis-Seq & 84.8$_{\pm 0.2}$  & 86.1$_{\pm 0.1}$ & 73.2$_{\pm 0.4}$           \\
    \bottomrule
\end{tabular}
}
\vspace{1em}
\caption{\textbf{Verification of the conclusion in Proposition~\ref{proposition:div_loss} with CelebA-CC.} Gender is used as the spurious attribute and hair color as the target. We report average test accuracy. The trends of accuracy are consistent with what was shown in Fig.\ref{fig:mf_ood_balance}, indicating that the conclusion that D-BAT reaches its optimal when $r_{\Dunlabeled}=0.0$ and DivDis(-Seq) reaches its optimal when $r_{\Dunlabeled}=0.5$ can \textbf{extend to a much larger and more realistic dataset}. DivDis-Seq is used here for a fair comparison.}
\label{fig:ood_balance_celeba}
\end{table}

\section{Proof of Proposition~\ref{prop:nflt-div}}\label{appendix:diversification_only_proofs}


We first remind our proposition: 

\textbf{Proposition 2.} \textit{For $K = (2|D\ood| -1)$ and $h^*$ the OOD labeling function, there exists a set of diverse $K$ hypotheses $h_1, ..., h_K$, i.e., $\aggloss_{D\ood}(h_i,h_j) = |\{ x \in D\ood:\; h_i(x) = h_j(x) \}| / |D\ood| \leq 0.5 \quad \forall i,j \in \{1, ..., K\}, i \neq j$ and it holds that $\max _{h’ \in {h_1, …, h_K}} \mathrm{Acc}(h^*, h') \leq 0.5$.}

This formulation covers our two methods of interest, D-BAT~\citep{pagliardini_agree_2023} and DivDis~\citep{lee_diversify_2023}. Indeed, the maximum agreement is upper-bounded by 0.5. For DivDis, the optimal solution has a maximum agreement of 0.5, as seen in Appendix~\ref{appendix:ood_balance_proof}. For D-BAT, the optimal solution has the lowest agreement possible. Indeed, for $K=2$, the optimal solution has $\aggloss_{D\ood}(h_1,h_2) = 0$. Thus, both methods optimal solutions are covered when upper-bounding the maximum agreement by 0.5 (as long as $K \leq (2|D\ood| -1)$).

We prove the existence of a diverse set of $K$ hypotheses, satisfying the condition of Proposition 2, using a classic construction from coding theory, called the Hadamard code~\citep{bose_note_1959}.

\textbf{Terminology.} We first make explicit the equivalence between a hypothesis space and coding theory terminology. In binary classification, a labeling function or hypothesis $h_i$ on $D\ood$ is a binary codeword (vector) of fixed length $N$, where $N$ = $|D\ood|$.  A set of $K$ hypotheses is now referred to as a \textit{code} $C$ of size $K$. We define the Hamming distance between codewords $h_i,h_j$ as $d(h_i,h_j) = \sum_{k=1}^N{\mathcal{I}[h_i(k) \neq h_j(k)]}$ where $\mathcal{I}$ is the indicator function and $h_i(k)$ is the hypothesis prediction on the $k$th data point from $D\ood$. The Hamming distance between two equal-length codewords of symbols is the number of positions at which the corresponding symbols are different.

The agreement between two hypotheses $h_i,h_j$ can now be rewritten using the Hamming distance as $\aggloss_{D\ood}(h_i,h_j) = \frac{1}{N}\sum_{k=1}^N{\mathcal{I}[h_i(k)} = h_j(k)] = \frac{1}{N}(N - \sum_{k=1}^N{\mathcal{I}[h_i(k) \neq h_j(k)]}) = 1 - \frac{d(h_i,h_j)}{N}$. Similarly, the accuracy can also be rewritten as $\mathrm{Acc}(h^*, h') = \aggloss_{D\ood}(h^*, h')  = 1 - \frac{d(h^*, h')}{N}$.





\hfill

\textit{Proof.}

We first use the fact that there exists a binary code $C$ with minimum distance $d^* = \min_{x,y \in C, x \neq y} d(x,y) = \frac{N}{2}$ and $|C| = 2N$. This binary code is the Hadamard code~\citep{bose_note_1959, rudra_lecture_2007}, also known as Walsh code. 
This binary code has $2N$ codewords of length $N$ and has the minimal distance of $\frac{N}{2}$.

We show now that we can modify the Hadamard code $C$ to obtain another code $C'$ with equivalent properties and $h^* \in C'$.
For $C'$, it then holds that $\max _{h’ \in C'} \mathrm{Acc}(h^*, h') \leq 0.5$, as it was shown above that $\mathrm{Acc}(h^*, h') = 1 - \frac{d(h^*, h')}{N} \leq 1 - \frac{d^*}{N} = 1 - \frac{N/2}{N} = 0.5$.
Further, we show how to construct such $C'$.

Let $h_1$ be the first codeword of $C$. Let us now define a function (or transformation) $f(h): \{0,1 \}^N \rightarrow \{0,1 \}^N $ such that $f(h_1) = h^*$, i.e $f$ transforms $h_1$ into $h^*$. Since we are dealing with binary vectors, the function $f(h)$ can be broken down into individual bit flips i.e. 

$$ f(h)(i) = \begin{cases} 
    h(i) 
    & \text{if }\; h_1(i) = h^*(i) \\
&\\  
    1 - h(i)
    & \text{if }\; h_1(i) \neq h^*(i) 
\end{cases}
$$
Applying $f$ to all codewords in code $C$ gives us a new code $C' = \{h \in C: f(h)\}$ and $h^* \in C'$. This operation maintains the minimum distance $d = \frac{N}{2}$ since:

\begin{equation*}
\begin{split}
        d(f(c_i),f(c_j)) &= \sum_{k=1}^N{\mathcal{I}[f(h_i)(k) \neq f(h_j)(k)]} \\
    &= \sum_{ \{k: \; h_1(k) = h^*(k)\} }{\mathcal{I}[h_i(k) \neq h_j(k)]} \\
    &+ \sum_{\{k: \; h_1(k) \neq h^*(k)\} }{\mathcal{I}[ 1 - h_i(k) \neq 1 - h_j(k))]} \quad \text{(by definition of $f$)} \\
    &= \sum_{\{k: \; h_1(k) = h^*(k)\} }{\mathcal{I}[h_i(k) \neq h_j(k)]} \\
    &+ \sum_{\{k: \; h_1(k) \neq h^*(k)\} }{\mathcal{I}[ h_i(k) \neq h_j(k))]} \\
    &= \sum_{k=1}^N{\mathcal{I}[h_i(k) \neq h_j(k)]} \\
    &= d(h_i,h_j)
\end{split}
\end{equation*}

Therefore, $C'$ is the code satisfying all the conditions of Proposition 2. 

As was shown before, a binary code $C$ is equivalent to a set of $|C|$ hypotheses with the same properties. Thus, with $N=|D\ood|$, the above construction gives us a set of $(2|D\ood| -1)$ ($2N$ minus the true labeling $h^*$) hypotheses satisfying the constraints of Proposition 2. This concludes our proof.

$\square$

{\large \textbf{Extension to multi-class classification}} 

We used the mathematical framework of coding theory and a classical result from it, the Hadamard code~\citep{bose_note_1959}, to prove Proposition 2, specifically for binary hypotheses. However, in coding theory, it has not been proven yet whether codes with similar \quotes{nice} properties, similar to Hadamard's, exist for any $q$-ary codes i.e. for hypotheses with $q$ possible classes. One exception is when $q$ is a prime number.

\begin{proposition}
    Let $q$ be the number of classes and a prime number. Let $m \in \mathbb{N^+}$ s.t. $|D_\mathrm{ood}| = q^m$. Then, for $K = (q\cdot |D_\mathrm{ood}| -1)$ and $h^*$ the OOD labeling function, there exists a set of diverse $K$ q-ary hypotheses $h_1, ..., h_K$, s.t., $A_{D_\mathrm{ood}}(h_i,h_j) = |{ x \in D_\mathrm{ood}: h_i(x) = h_j(x) }| / |D_\mathrm{ood}| \leq \frac{1}{q} \quad \forall i,j \in {1, ..., K}, i \neq j$, and it holds that $\max _{h’ \in {h_1, …, h_K}} \mathrm{Acc}(h^*, h') \leq \frac{1}{q}$
\end{proposition}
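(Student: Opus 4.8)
The plan is to mirror the binary proof of Proposition~2 almost verbatim, replacing the Hadamard code by its $q$-ary analogue. First I would identify a $q$-ary hypothesis on $D\ood$ with a vector in $\mathbb{F}_q^N$, where $N = |D\ood| = q^m$ and the alphabet $\{0,\dots,q-1\}$ is identified with the field $\mathbb{F}_q = \mathbb{Z}/q\mathbb{Z}$; this identification is exactly where the primality of $q$ enters, since it guarantees $\mathbb{Z}/q\mathbb{Z}$ is a field. As in the binary case, the Hamming distance $d(h_i,h_j)$ counts the coordinates on which the two hypotheses disagree, so the agreement rewrites as $\aggloss_{D\ood}(h_i,h_j) = 1 - d(h_i,h_j)/N$, and likewise $\mathrm{Acc}(h^*,h') = 1 - d(h^*,h')/N$.

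The key ingredient is a $q$-ary code with the right parameters, for which I would take the first-order generalized Reed--Muller code $C = \mathrm{RM}_q(1,m)$. Its codewords are the evaluation vectors, over all $q^m$ points of $\mathbb{F}_q^m$, of the affine functions $f(x) = a_0 + \sum_{i=1}^m a_i x_i$ with $a_0,\dots,a_m \in \mathbb{F}_q$. This code has length $N = q^m$, exactly $q^{m+1} = q\cdot|D\ood|$ distinct codewords, and minimum distance $q^{m-1}(q-1)$: a non-constant affine function vanishes on an affine hyperplane of $q^{m-1}$ points, hence has weight $q^m - q^{m-1}$, while a nonzero constant has weight $q^m$. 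Consequently every pair of distinct codewords satisfies $\aggloss_{D\ood}(h_i,h_j) \leq 1 - q^{m-1}(q-1)/q^m = 1/q$, which is precisely the required diversity bound.

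It then remains to force $h^*$ into the code while preserving all pairwise distances. Exactly as in the binary argument (where bit-flipping was used), I would translate: choosing any codeword $h_1 \in C$ and setting $v = h^* - h_1 \in \mathbb{F}_q^N$, the coset $C' = \{c + v : c \in C\}$ contains $h^* = h_1 + v$, and since $d(c_i + v, c_j + v) = d(c_i,c_j)$, the translation preserves both the size $q^{m+1}$ and the minimum distance $q^{m-1}(q-1)$. Discarding $h^*$ itself leaves $K = q\cdot|D\ood| - 1$ hypotheses, each at distance at least $q^{m-1}(q-1)$ from $h^*$, so that $\mathrm{Acc}(h^*,h') \leq 1/q$ and the pairwise agreements are all $\leq 1/q$, as claimed.

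The main obstacle -- and the reason the statement restricts to prime $q$ and to $N$ a power of $q$ -- is the existence of a code attaining these exact parameters. The whole argument hinges on having a genuine field $\mathbb{F}_q$ so that the affine-function construction yields $q^{m+1}$ distinct codewords at a uniform minimum distance $N(q-1)/q$; for composite $q$ the ring $\mathbb{Z}/q\mathbb{Z}$ is not a field and no analogous equidistant family is guaranteed by this construction. The only routine verifications are that distinct affine functions give distinct evaluation vectors (evaluate at the origin and the standard basis vectors) and that a non-constant affine function vanishes on exactly $q^{m-1}$ points, both immediate from elementary linear algebra over $\mathbb{F}_q$.
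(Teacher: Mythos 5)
Your proof is correct, and it follows the same overall skeleton as the paper's: obtain a $q$-ary code of length $N = |D_{\mathrm{ood}}|$ with $qN = q^{m+1}$ codewords and minimum distance $N(q-1)/q$, translate it so that $h^*$ becomes a codeword, convert Hamming distance to agreement and accuracy, and discard $h^*$ to get $K = qN-1$ hypotheses. The difference is in how the key existence lemma is established. The paper simply cites results on nonlinear $q$-ary Hadamard-like codes \citep{stepanov_nonlinear_2006,stepanov_nonlinear_2017} and defers the combinatorial content to the literature, whereas you construct the code explicitly as the first-order generalized Reed--Muller code, i.e., the evaluation vectors of affine functions over $\mathbb{F}_q^m$, and verify its parameters from first principles: distinct affine functions yield distinct evaluation vectors, a non-constant affine function vanishes on exactly $q^{m-1}$ points so nonzero codewords have weight at least $q^{m-1}(q-1)$, and linearity turns this minimum weight into the minimum distance. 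Your route buys self-containedness and makes transparent exactly where primality enters (the field structure of $\mathbb{Z}/q\mathbb{Z}$, needed for the hyperplane counting), and you also spell out the coset-translation step $d(c_i+v,c_j+v)=d(c_i,c_j)$ that the paper only inherits implicitly from its binary bit-flip argument. The paper's citation-based proof is shorter; yours is arguably the more satisfying one, since every claim is checked and the construction is classical.
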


\begin{proof}
    Using a similar argument from the proof of Proposition 2, \citep{stepanov_nonlinear_2006,stepanov_nonlinear_2017} tells us that for any $|D_\mathrm{ood}| = q^m$ where $m \in \mathbb{N^+}$, we can find a code similar to Hadamard's with minimum distance equal to  $\frac{N(q-1)}{q}$ and cardinality equal to $q^{m+1}$ = $q\cdot |D_\mathrm{ood}|$. By removing the semantic hypothesis from the count, we obtain that Proposition 3 holds for $K = (q\cdot |D_\mathrm{ood}| -1)$.
\end{proof}

\section{Agreement score and Implicit Bias of Diverse Hypotheses}

\subsection{Agreement Score and Task Discovery \citep{atanov_task_2022}}
\label{app:high_as_discovered_tasks}
In this section, we introduce more details on the background of \cite{atanov_task_2022}, as well as how we leverage the findings from it.

\textbf{Agreement score as a measure of inductive bias alignment.}
We use the agreement score (AS)~\citep{atanov_task_2022,hacohen_lets_2020, jiang_assessing_2022} to measure the alignment between the found hypotheses and the inductive biases of a learning algorithm.
It is measured in the following way: given a training dataset $D_t$ labeled with a true hypothesis $h^*$, unseen unlabeled data $D\ood$, and a neural network learning algorithm $\learningalgo$, train two networks from different initializations on the same training data, resulting in two hypotheses $h_1, h_2 \sim \learningalgo(D_t, h^*)$, and measure the agreement between these two hypotheses on $D{\ood}$:
\begin{equation}
\label{eq:AS}
    \text{AS}_{\learningalgo} (h^* ; D_t, D\ood) =  \mathbb{E}_{h_1, h_2 \sim \learningalgo(D_t,h^*) }\mathbb{E}_{x \sim D\ood} \left[ h_1(x)=h_2(x) \right]
\end{equation}
Recent works \citep{atanov_task_2022,baek_agreement---line_2022} show that the AS correlates well with how well a learning algorithm $\learningalgo$ generalizes on a given training task represented by a hypothesis $h$. Indeed, high AS is a necessary condition for generalization~\citep{atanov_task_2022} (different outcomes of $\learningalgo$ have to at least converge to a similar solution). Finally, a learning algorithm will generalize on a labeling if the labeling is aligned with the learning algorithm's inductive biases, thus, we use AS as a measure of how well a given hypothesis $h$ is aligned with the inductive biases of $\learningalgo$. 


\textbf{Task Discovery.} 
\citet{atanov_task_2022} use bi-level optimization (also called meta-optimization) to optimize the agreement score (i.e., Eq.~\ref{eq:AS}) and discover, on any dataset, high-AS hypotheses (tasks in the terminology of Task Discovery) that a given learning algorithm can generalize well on.
They show that there are many \textit{diverse} high-AS hypotheses different from semantic human annotations.
In Fig.~\ref{fig:discovered_tasks}, we show examples of the high-AS hypotheses discovered for the ResNet18 architecture on CIFAR-10~\citep{krizhevsky_learning_2009}. 
 

\begin{figure}[t]
	\centering
        \vspace{0em}
	   \includegraphics[width=0.9\textwidth]{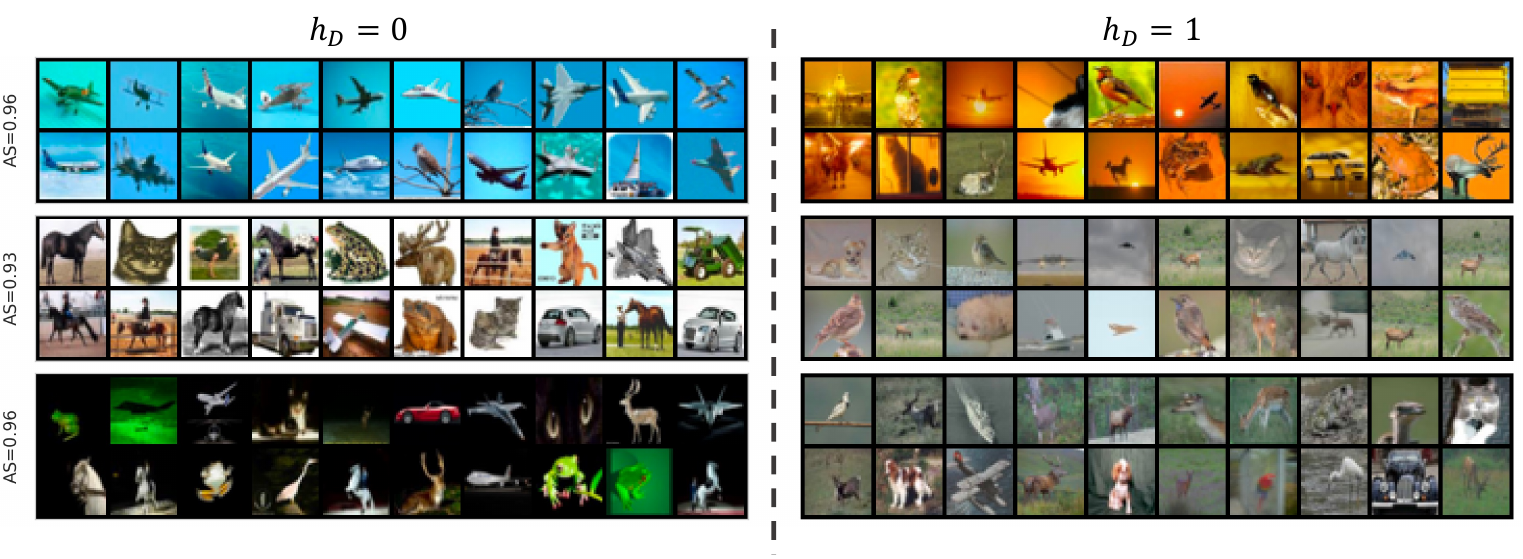}
        \vspace{-0.2em}
	\caption{
    \textbf{Examples of high-AS hypotheses discovered in \citet{atanov_task_2022}.} 
    Each hypothesis is illustrated by exemplar images from each class as labeled be the corresponding discovered hypothesis $h_D$. Neural networks can generalize by training on the $h_D$'s labeling.
	}
	\label{fig:discovered_tasks}
        \vspace{0em}
\end{figure}

\textbf{Adversarial dataset splits (Fig.~\ref{fig:adv_split}).} \citet{atanov_task_2022} also introduces the concept of adversarial dataset splits, which is a train-test dataset partitioning such that neural networks trained on the training set fail to generalize on the test set.
To do that, they induce a spurious correlation between a high-AS discovered hypothesis $h_D$ and the (target) semantic hypothesis $h^\star$ on the training data, and the opposite correlation on the test set.
Specifically, they select data points as training set $D_t$, such that a discovered high-AS hypothesis $h_D$ (specifically, $\mathrm{AS}(h_D) > \mathrm{AS}(h^\star)$) completely spurious correlates with $h^\star$, i.e. $\{ x \in D_t:\; h_D(x) = h^\star(x) \}$. 
The test set $D_\mathrm{test}$ is constructed such that the two hypotheses are inversely correlated, i.e., $\{ x \in D_{test} :\; h_D(x) \neq h^\star(x) \}$.
Theoretically, a NN trained on such a training set $D_t$ should learn the hypotheses with a higher AS, i.e., $h_D$, which would lead to a low accuracy when tested on $D_\mathrm{test}$.
This was indeed shown to hold in practice, where the test accuracy drops from $0.8$ for a random split to $0.2$ for an adversarial split.  

{Adversarial splits}, therefore, show that neural networks favor learning the task with a higher AS (the background color in the case of Fig.~\ref{fig:adv_split}) when there are two hypotheses that can 'explain' the training data equally well.
In this work, we refer to this 'preference' as an alignment between the neural network and $h_D$.
This creates a controllable testbed for studying the effect of spurious correlations on NN training, which we also adopt in our study.

\begin{figure}[t]
	\centering
        \vspace{0em}
	   \includegraphics[width=0.95\textwidth]{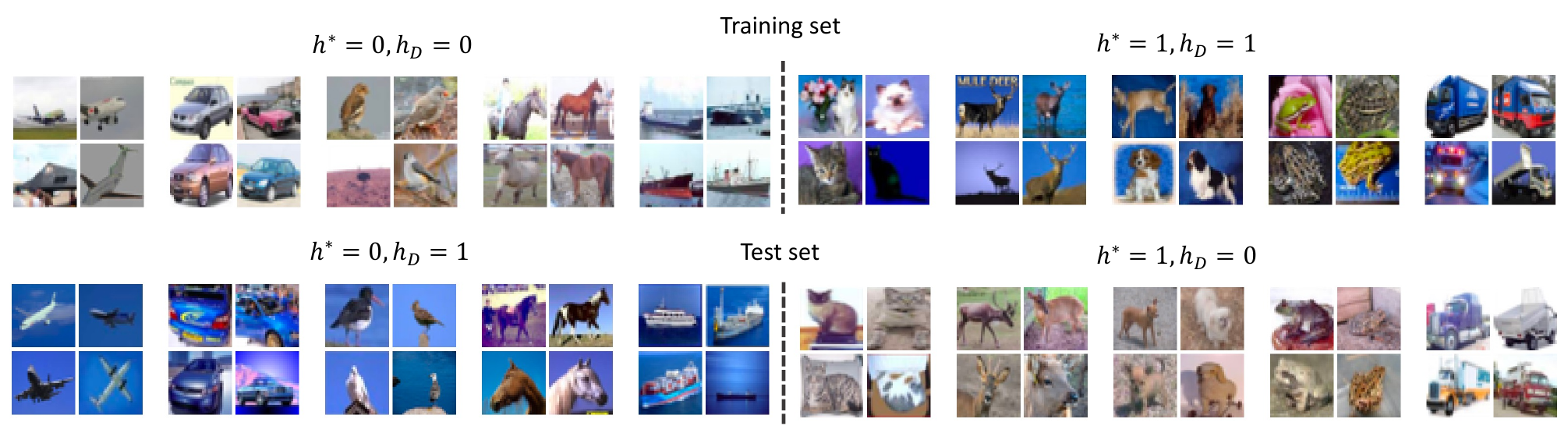}
        \vspace{-0.2em}
	\caption{\textbf{Illustration of an adversarial split introduced in \citep{atanov_task_2022}.} A high-AS discovered hypothesis $h_D$ 'spuriously correlated' with semantic hypothesis $h^\star$ on the training set, but inversely correlated with $h^\star$ on test set. Training ERM on the training set and evaluating on test set give $< 20$\% test accuracy (according to \citep{atanov_task_2022}'s Fig. 7). In the context of this work, the test set in a given adversarial split is inversely correlated ($r=0$).
	}
	\label{fig:adv_split}
\end{figure} 

\subsection{Diversification Finds Hypotheses Aligned with Inductive Biases}
\label{appendix:agreement_score}

\textit{Disclaimer: For an introduction on agreement score, we refer to Appendix~\ref{app:high_as_discovered_tasks}}

In this section, we study how the diversification process is biased in practice by the inductive biases of the chosen learning algorithm. Specifically, using agreement score, we demonstrate that D-BAT and DivDis find hypotheses that are \textit{not only diverse but aligned with the inductive bias of the learning algorithm}.

\textbf{Experimental setup}
\label{app:as_landscape_data}

\textbf{CIFAR-10.} we build on top of the adversarial splits and construct a CIFAR-10 data split with complete spurious correlation on the training data and a balanced (no spurious correlation) unlabeled OOD data, as shown in Fig.\ref{fig:spurious_cifar}. This is a typical setting on which D-BAT \citep{pagliardini_agree_2023} and DivDis \citep{lee_diversify_2023} apply. More precisely, $h^\star$ is a semantic binary classification on CIFAR-10, defined by choosing a 5 vs 5 split of the original 10 classes. We define the spuriously correlated CIFAR-10 data by using an arbitrary high AS binary labeling $h_D$ as the spurious hypothesis, similarly to "adversarial splits" introduced by \citep{atanov_task_2022}. There are two reasons for using this setting: one to easily control the data setup, and one for using \citep{atanov_task_2022} as a reference of the AS for different hypotheses on CIFAR-10.

\begin{figure}[!t]
	\centering
        \vspace{0em}
	   \includegraphics[width=0.99\textwidth]{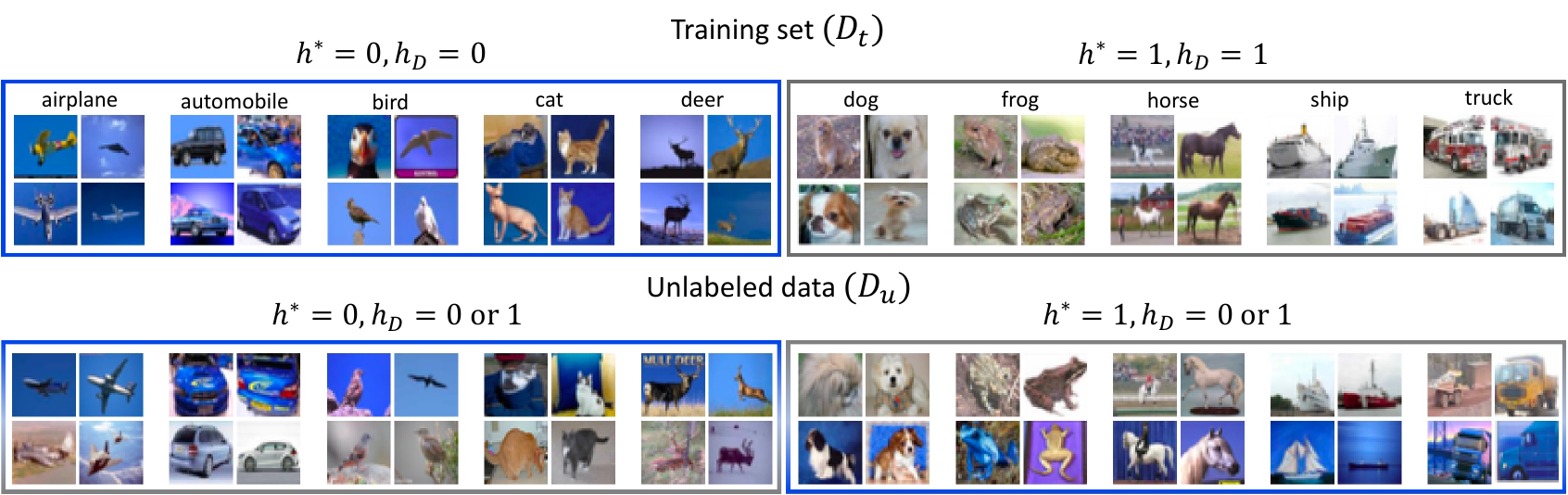}
        \vspace{0.2em}
	\caption{
        \textbf{Illustration of the constructed CIFAR-10 data with spurious correlation.} The semantic binary labeling $h^\star$ is defined by a 5 vs. 5 split of the original CIFAR-10 classes. Spurious hypothesis/feature $h_D$ (color) is predictive in the training set $D_t$, and non-predictive on unlabeled OOD data $\Dunlabeled$.
	}
	\label{fig:spurious_cifar}
\end{figure}


\textbf{Measuring the AS of found hypotheses.} For a given dataset with training data $D_t$, unlabeled OOD data $D\ood^U$, and test OOD data $D\ood$, we train a diversification method (without pretraining) to find multiple diverse hypotheses $h_1, \ldots, h_K$ and measure their agreement scores.
More precisely, for each hypothesis $h_i$, we measure $\text{AS}_{\learningalgo}(h_i; D_t \cup D\ood^U , D\ood)$ where $\learningalgo$ is the same learning algorithm (e.g. ResNet18) used to find the diverse hypotheses. This AS allows us to assess whether $h_i$ is labeling $D\ood$ randomly or in a way that aligns well with the inductive biases of the learning algorithms. We provide more details on the setting in Appendix~\ref{app:as_landscape_data} and illustrate the dataset creation in Fig.~\ref{fig:spurious_cifar}.


\textbf{Implicit bias of diverse hypotheses.} Tab.~\ref{tab:AS} shows the agreement score of random hypotheses $h_\mathrm{R}$ (true labels on $D_t$ but random labels on $D\ood$) and the diverse hypotheses found by both diversification methods.
We observe a clear gap between the two, indicating that all diverse hypotheses label $D\ood$ in a structured non-random manner.

Measuring the agreement score of the true or semantic hypothesis $h^\star$ gives us an estimate of the expected AS value of a hypothesis aligned with the inductive bias of $\learningalgo$ (otherwise, we wouldn't expect $\learningalgo$ to be able to learn $h^\star$). 
We observe that the hypotheses found by DivDis and D-BAT have agreement scores similar to that of $h^\star$, indicating good alignment with the inductive biases of $\learningalgo$.
Thus, optimizing Eq.~~\ref{eq:div_opj}, using neural networks as the learning algorithm, leads to diverse hypotheses \textit{implicitly} biased towards those favored by its inductive biases. According to the definition of AS, such alignment is expected from a hypothesis found through empirical risk minimization (ERM), however, it is not expected from diverse hypotheses (as defined in Eq.~\ref{eq:div_opj}), given that the additional diversification loss could destroy this alignment. This analysis sheds light on the process by which diverse hypotheses are found, and puts an emphasis on the choice of a good learning algorithm, which is crucial, as we show in the subsequent Sec.~\ref{sec:removing_pretraining}.

Similar to what is shown with ResNet in Tab.~\ref{tab:AS}, in Tab.~\ref{tab:as_vit_mlp}, we repeat the experiment with two different architectures, MLP and ViT~\citep{dosovitskiy_image_2021}, on CIFAR-10.
The diverse hypotheses found by D-BAT and DivDis have high AS. This demonstrates that our above conclusions also hold with different architectures.

\begin{table}[!t]
  \centering
  \begin{minipage}[!t]{\textwidth} %
  \centering
  \begin{minipage}[!t]{0.65\textwidth} %
    \centering
    \resizebox{\textwidth}{!}{%
    \begin{tabular}{lcccc}
        \toprule
         & \multicolumn{2}{c}{CIFAR-10} & \multicolumn{2}{c}{Waterbirds} \\
         \cmidrule(l){2-3} \cmidrule(l){4-5}
        Hypothesis & AS & Test Acc.(\%) & AS & WG Acc.(\%) \\ \midrule
        Semantic ($h^\star$) & 0.83 & 100.0 & 0.84 & 100.0 \\
        Random ($h_R$) & 0.63$_{\pm 0.01}$ & 50.0 & 0.55$_{\pm 0.02}$ & 50.0 \\ \midrule
        D-BAT & 0.82$_{\pm 0.01}$ & 60.1$_{\pm 0.2}$  & 0.89$_{\pm 0.01}$ & 25.3$_{\pm 3.0}$             \\
        DivDis & 0.81$_{\pm 0.02}$ & 57.3$_{\pm 0.5}$ & 0.91$_{\pm 0.04}$ & 28.9$_{\pm 4.8}$           \\ \bottomrule
    \end{tabular}
    }
    \end{minipage}
    \vspace{1.em}
    \caption{\textbf{The diverse hypotheses found by diversification methods have high AS} (i.e., aligned with the biases of the learning algorithm). Nonetheless, without additional inductive biases (e.g., the correct pretraining strategy, which is examined in Sec.\ref{sec:removing_pretraining} and Tab.~\ref{tab:as_waterbirds}), they do not generalize. The test accuracy of CIFAR-10 is measured on hold-out balanced data  ($D\ood$), and WG Acc. stands for Waterbirds worst-group test accuracy. Results are averaged over 3 seeds.
    }
    \label{tab:AS}
    \end{minipage}
\end{table}

\begin{table}[!t]
\centering
\resizebox{0.6\textwidth}{!} {%
\begin{tabular}{lcccc}
        \toprule
         & \multicolumn{2}{c}{MLP} & \multicolumn{2}{c}{ViT} \\
         \cmidrule(l){2-3} \cmidrule(l){4-5}
        Hypothesis & AS & Test Acc.(\%) & AS & Test Acc.(\%) \\ \midrule
        Semantic ($h^\star$) & 0.80 & 100.0 & 0.76 & 100.0 \\ \midrule
        D-BAT & 0.89$_{\pm 0.02}$ & 56.1$_{\pm 0.4}$  & 0.90$_{\pm 0.01}$ & 59.2$_{\pm 0.3}$             \\
        DivDis  & 0.85$_{\pm 0.02}$ & 58.3$_{\pm 0.1}$ & 0.87$_{\pm 0.02}$ & 57.7$_{\pm 0.1}$           \\ \bottomrule
    \end{tabular}
    }
    \vspace{1em}
    \caption{\textbf{On CIFAR-10, D-BAT and DivDis also find high-AS hypotheses with MLP and ViT \citep{dosovitskiy_image_2021}.} Results are
averaged over 3 seeds.}
    \label{tab:as_vit_mlp}
\end{table}

\textbf{Diverse hypotheses cannot generalize without the correct pretraining}

As seen above, D-BAT and DivDis produce diverse hypotheses \textit{implicitly} biased towards those favored by the inductive biases of its learning algorithm. Nonetheless, this implicit bias may not lead to OOD generalization, i.e., $h^* \notin \H_K^*$, as the test accuracies in Tab.~\ref{tab:AS} are found to be near the chance level. 

In Tab.~\ref{tab:as_waterbirds}, on Waterbirds, we repeat the same experiment as in Tab.~\ref{tab:AS}, with an additional variable. The ResNet50 model is either trained from scratch or starting from ImageNet-1k supervised pretraining weights.
We can see that pretraining does not affect whether DivDis and D-BAT find high AS hypotheses, however it greatly influences the generalization capability of the found hypotheses. These results corroborate with Sec.~\ref{sec:removing_pretraining} that the correct choice of inductive bias is crucial to unlock OOD generalization.



\begin{table}[t]
\centering
\begin{tabular}{lcc}
        \toprule
        Hypothesis & AS & WG Acc.(\%) \\ \midrule
        Semantic ($h^\star$) & 0.84 & 100.0 \\
        Random ($h_R$) & 0.55$_{\pm 0.02}$ & 50.0 \\ \midrule
        D-BAT (NP) & 0.89$_{\pm 0.01}$ & 25.3$_{\pm 3.0}$               \\
        D-BAT (P) & 0.86$_{\pm 0.02}$ & 59.1$_{\pm 1.6}$               \\
        DivDis (NP)  & 0.91$_{\pm 0.04}$ & 28.9$_{\pm 4.8}$           \\ 
        DivDis (P)  & 0.86$_{\pm 0.02}$ & 81.3$_{\pm 2.2}$           \\
        \bottomrule
\end{tabular}
\vspace{0.5em}
\caption{\textbf{D-BAT and DivDis find high-AS hypotheses on Waterbirds \citep{sagawa_distributionally_2020}}. The Agreement Score (AS) is measured on different hypotheses on Waterbirds. For D-BAT and DivDis, $K=2$ hypotheses are considered. 'WG Acc.' stands for worst-group accuracy. "NP" signifies non-pretrained, "P" signifies ImageNet pretrained. Two ResNet-50 models are trained from scratch when measuring the AS. The best model performance is shown for DivDis. For D-BAT, we always show the performance of the second model. Results are averaged over 3 seeds.}
\label{tab:as_waterbirds}
\end{table}

\section{Results and Implementation Details of Sec.~\ref{sec:removing_pretraining}}
 \label{appendix:empirical-exp-details}

\subsection{Experimental details}

\textbf{Remarks on D-BAT and DivDis.}
\begin{itemize}
    \item All experiments were run using DivDis and D-BAT respective codebases to ensure closest reproducibility to their presented methods and results.
    \item DivDis' default setting is to augment the data while training. This option was disabled to ensure a fair comparison to D-BAT.
    \item For their results, DivDis rebuilt the Waterbirds\citep{sagawa_distributionally_2020} dataset from scratch. On the contrary, D-BAT used the one provided by the WILDS\citep{koh_wilds_2021} library. To ensure a fair comparison, both methods were run using the latter version of the dataset.
    \item If not precised, all train, validation and test splits are taken as provided from \citet{pagliardini_agree_2023, lee_diversify_2023} or WILDS.  
    \item The best models are selected according to validation accuracy.
    
\end{itemize}

\textbf{Computational resources.} Each experiment can be run on a single A100 40GB GPU.

\label{appendix:exp-details}
\textbf{Models.} If not precised, the model used in most experiments is ResNet50~\citep{he_deep_2015}. Otherwise, when using a Vision Transformer (ViT), we use a ViT-B/16\footnote{https://pytorch.org/vision/main/models/generated/torchvision.models.vit\_b\_16.html}~\citep{dosovitskiy_image_2021}. The last exception is for DivDis Camelyon17 (DenseNet121~\citep{huang_densely_2017}).

\textbf{DivDis parameters.}  For Waterbirds variants, the optimizer is SGD, the number of epochs is 100, the learning rate is 0.001, the weight decay is 0.0001. The $\alpha$ parameter (referred as $\lambda$ in DivDis) was tuned over $\{ 0.1, 1, 10 \}$. For Office-Home, the optimizer is SGD, the number of epochs is 50, the learning rate is 0.001, the weight decay is 0.0001, and the $\alpha$ parameter was tuned over $\{ 0.1, 1, 10 \}$. For Camelyon17, the original best-performing setting from DivDis was used.

\textbf{D-BAT parameters.} For Waterbirds variants and Office-Home, the optimizer is SGD, the learning rate is 0.001, the weight decay is 0.0001. Given that D-BAT optimizes sequentially, the number of epochs is an important parameter to tune. We tuned over $\text{epochs} \in \{30,100\}$ and $\alpha \in \{0.0001,0.1\}$. 
For Camelyon17, the original D-BAT best-performing setting was used.

\label{appendix:empirical-add-results}



\subsection{Complete results of Sec.~\ref{sec:removing_pretraining}}

We provide the full results for Fig.~\ref{fig:comparing_pretraining} (ERM baseline included) in Tab.~\ref{tab:empirical-varying-pretraining_office}. We also provide the accuracy of each new head of D-BAT for Tab.~\ref{tab:second_best_mult_hyp_results_best_hypothesis} in Tab.~\ref{tab:second_best_mult_hyp_results}. Finally, in Fig.~\ref{fig:naive_scaling_divdis}, we further show that DivDis does not scale well to larger K (e.g. $K = 64$) “out-of-the-box”, and the performance drops as the number of hypotheses increases. Note that testing D-BAT in this regime would be prohibitively expensive.

\begin{table*}[!tbp]
\centering
\resizebox{\linewidth}{!} {%
\begin{tabular}{@{}ccccccccccccc@{}}
\toprule
 & &
  \multicolumn{2}{c}{From scratch} &
  \multicolumn{5}{c}{Self-supervised} &
  \multicolumn{3}{c}{Supervised} \\
 & &
 
  \begin{tabular}[c]{@{}c@{}}Resnet50\end{tabular} &
  \multicolumn{1}{c|}{\begin{tabular}[c]{@{}c@{}}ViT-B/16\end{tabular}} &
  SwAV &
  SIMCLRv2 &
  MoCo-v2 &
  ViT-MAE &
  \multicolumn{1}{c|}{ViT-Dino} &
  Adv. robustness &
  Resnet50 IN &
  ViT-B/16 IN \\ \midrule
\multirow{3}{*}{OfficeHome} & \multicolumn{1}{c|}{ERM (D-BAT $h_1$)} &
  10.4$_{\pm 0.9}$ &
  \multicolumn{1}{c|}{5.1$_{\pm 5.4}$} &
  20.6$_{\pm 1.4}$ &
  43.9$_{\pm 0.6}$ &
  25.2$_{\pm 1.5}$ &
  \underline{61.2}$_{\pm 0.9}$ &
  \multicolumn{1}{c|}{55.5$_{\pm 2.9}$} &
  53.6$_{\pm 0.7}$ &
  58.3$_{\pm 0.3}$ &
  \textbf{75.9$_{\pm 0.6}$} \\
& \multicolumn{1}{c|}{D-BAT $h_2$} &
  9.5$_{\pm 0.5}$ &
  \multicolumn{1}{c|}{8.1$_{\pm 5.9}$} &
  21.3$_{\pm 0.9}$ &
  46.1$_{\pm 0.5}$ &
  27.1$_{\pm 1.2}$ &
  \underline{61.9}$_{\pm 0.7}$ &
  \multicolumn{1}{c|}{61.6$_{\pm 1.6}$} &
  55.5$_{\pm 0.7}$ &
  58.2$_{\pm 0.9}$ &
  \textbf{79.2$_{\pm 0.4}$} \\
& \multicolumn{1}{c|}{DivDis (best)} &
  7.0$_{\pm 0.1}$ &
  \multicolumn{1}{c|}{9.4$_{\pm 0.5}$} &
  23.1$_{\pm 1.2}$ &
  39.6$_{\pm 1.2}$ &
  28.3$_{\pm 0.8}$ &
  47.9$_{\pm 0.7}$ &
  \multicolumn{1}{c|}{10.3$_{\pm 1.9}$} &
  51.4$_{\pm 0.8}$ &
  \underline{55.9}$_{\pm 0.6}$ &
  \textbf{70.7$_{\pm 1.6}$} \\ \midrule
  \multirow{3}{*}{Waterbirds-CC} & \multicolumn{1}{c|}{ERM (D-BAT $h_1$)} &
  8.6$_{\pm 2.1}$ &
  \multicolumn{1}{c|}{13.5$_{\pm 6.7}$} &
  7.0$_{\pm 0.3}$ &
  9.2$_{\pm 3.3}$ &
  35.6$_{\pm 3.8}$ &
  11.9$_{\pm 0.6}$ &
  \multicolumn{1}{c|}{9.2$_{\pm 0.8}$} &
  20.9$_{\pm 5.1}$ &
  \underline{30.1}$_{\pm 2.3}$ &
  \textbf{33.1$_{\pm 2.2}$} \\ 
  & \multicolumn{1}{c|}{D-BAT $h_2$} &
  15.0$_{\pm 5.1}$ &
  \multicolumn{1}{c|}{8.6$_{\pm 6.8}$} &
  22.6$_{\pm 6.6}$ &
  15.4$_{\pm 6.6}$ &
  36.1$_{\pm 5.9}$ &
  55.5$_{\pm 2.8}$ &
  \multicolumn{1}{c|}{47.6$_{\pm 5.0}$} &
  49.7$_{\pm 9.8}$ &
  \textbf{67.7$_{\pm 3.2}$} &
  \underline{57.2}$_{\pm 3.7}$ \\ 
& \multicolumn{1}{c|}{DivDis (best)} &
  25.0$_{\pm 2.7}$ &
  \multicolumn{1}{c|}{7.8$_{\pm 3.9}$} &
  20.4$_{\pm 16.9}$ &
  22.5$_{\pm 4.0}$ &
  \underline{49.4}$_{\pm 10.3}$ &
  38.9$_{\pm 7.3}$ &
  \multicolumn{1}{c|}{11.2$_{\pm 4.7}$} &
  47.1$_{\pm 0.8}$ &
  \textbf{70.5$_{\pm 4.7}$} &
  22.7$_{\pm 2.3}$ \\\bottomrule
\end{tabular}
}
\caption{\textbf{The performance of diversification methods is highly sensitive to the choice of architecture and pretraining method.}. Average accuracy on Office-Home and worst-group accuracy on Waterbirds-CC for different pretraining methods. All methods are pretrained on ImageNet-1k. If not specified, the used model is ResNet50. Results are averaged over 3 seeds. \textbf{Bold} and \underline{underline} stand for the best and second best, respectively.}
\label{tab:empirical-varying-pretraining_office}
\end{table*}

\begin{table}[t]
\centering
\resizebox{0.8\textwidth}{!}{%
\begin{tabular}{@{}cc|cccc@{}}
\toprule
Dataset & Method               & $K=2$             & $K=3$            & $K=4$            & $K=5$            \\ \midrule
Waterbirds-CC & D-BAT (ViT-B/16 IN) & 57.1$_{\pm 3.7}$ & 45.0$_{\pm 1.4}$ & 45.5$_{\pm 1.7}$  & 44.5$_{\pm 1.8}$ \\
        & DivDis (MoCO-v2)     & 49.4$_{\pm 10.3}$ & 51.7$_{\pm 6.0}$ & 49.6$_{\pm 8.3}$ & 48.4$_{\pm 0.9}$ \\
Office-Home   & D-BAT (ViT-MAE)     & 61.9$_{\pm 0.7}$ & 62.6$_{\pm 0.1}$ & 60.8$_{\pm 0.4}$ & 61.7$_{\pm 0.7}$ \\
        & DivDis (Resnet50 IN) & 55.9$_{\pm 0.6}$  & 54.6$_{\pm 0.1}$ & 53.6$_{\pm 0.4}$ & 53.1$_{\pm 0.2}$ \\ \bottomrule
\end{tabular}%
}
\caption{\textbf{Increasing the number of hypotheses, while using the second-best inductive bias, does not bridge the performance gap with the best inductive bias}. Best hypothesis performance for DivDis and the corresponding hypothesis performance for D-BAT are reported. The second-best inductive bias was chosen according to Fig.~\ref{fig:comparing_pretraining}. Results are averaged over 3 seeds.}
\label{tab:second_best_mult_hyp_results}
\end{table}

\begin{figure}[t]
    \centering
    \includegraphics[width=0.55\linewidth]{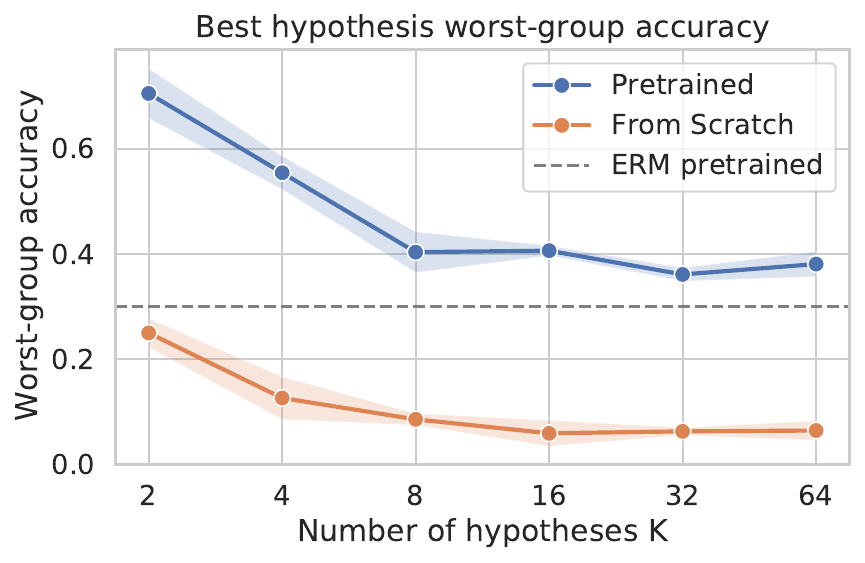}
    \label{fig:subfig1}
\vspace{-.5em}
  \caption{\textbf{Naively increasing the number of hypotheses in DivDis is detrimental to performance.} $y$-axis shows the worst-group accuracy of the best hypothesis on Waterbirds-CC. We use Resnet50 as a backbone. Pretraining is supervised pretraining on ImageNet-1k. \quotes{ERM pretrained} baseline stands for standard empirical risk minimization (ERM) without the usage of diversification. Results are averaged over 3 seeds. Preliminary investigation of this behavior suggests that the diversification loss is not optimized well when K grows large (e.g. $K > 8$) due to the averaging over all pairs of hypotheses, and that another aggregation function, e.g., maximum, could work better.}
  \label{fig:naive_scaling_divdis}
\end{figure}

\subsection{Pretraining strategy and architecture details.}

In Fig.~\ref{fig:comparing_pretraining}, we vary the pretraining method and architecture, and measure the effects on performance. We provide additional details here. If not precised, the methods use a ResNet-50\citep{he_deep_2015} model. All 8 variations are pretrained on the ImageNet-1k\citep{russakovsky_imagenet_2015} dataset: 
\begin{itemize}
    \item Self-supervised
        \begin{itemize}
            \item SwAV~\citep{caron_unsupervised_2020}
             \item SimCLRv2~\citep{chen_big_2020}
             \item MoCo-v2~\citep{chen_improved_2020}
             \item ViT-B/16 MAE~\citep{he_masked_2021}
             \item ViT-B/16 Dino~\citep{caron_emerging_2021}
        \end{itemize}
    \item Supervised
        \begin{itemize}
            \item Adversarially robust classifiers~\citep{salman_adversarially_2020}. 
             \item Resnet50 IN~\citep{he_deep_2015}, supervised pretraining on ImageNet-1k. This is the pretraining method used by \citep{lee_diversify_2023,pagliardini_agree_2023} in their papers.
             \item  ViT-B/16 IN~\citep{dosovitskiy_image_2021}, supervised pretraining on ImageNet-1k.
        \end{itemize}        
\end{itemize}

\textbf{Experimental details} For the adversarially robust classifier, the L2-Robust ImageNet ResNet-50 ($\epsilon = 0.05$) model was chosen, following the advice of \citep{salman_adversarially_2020}, as it is hypothesized that smaller values of $\epsilon$ tend work better on datasets where leveraging finer-grained features are necessary (i.e., where there is less norm-separation between classes in the input space), such as Waterbirds-CC or Office-Home. Each variation hyperparameters were tuned following the same procedure as described in \ref{appendix:exp-details}.

\section{Detailed Experimental Setup and Additional Results for Sec.~\ref{sec:effects_of_arch_biases}}
\label{appendix:steering_ib_data}

In Sec.\ref{sec:effects_of_arch_biases}, we demonstrate that using different inductive biases can drastically and predictably influence a diversification method. 
Here we provide more details on how we construct such examples. 
Additionally, in Tab.~\ref{tab:steering_vit} we also provide results where the examples are constructed using a ViT-ResNet pair (instead of MLP-ResNet pair). 
Finally, we provide an extension Tab.~\ref{tab:steering} by showing how an inductive bias gradually gets favorable and vice versa through the transition of spurious ratios, in Fig.~\ref{fig:smooth_transition}.


\textbf{Construction}

\begin{itemize}[topsep=-2pt,itemsep=4pt,labelindent=*,labelsep=7pt,leftmargin=15pt]
\item \textbf{Prerequisite}: We consider a semantic (5-vs-5) binary classification task $h^\star$ on CIFAR-10 \citep{krizhevsky_learning_2009} (i.e., airplane, automobile, bird, cat, deer original classes as class 1 and dog, frog, horse, ship, truck original classes as class 0).

\textbf{}

\item \textbf{Step 1 (Selecting hypotheses aligned with learning algorithms from \citep{atanov_task_2022})}: We take two high-AS hypotheses (see Fig.~\ref{fig:discovered_tasks} for examples of such hypotheses) discovered in \citep{atanov_task_2022} for MLP and ResNet18 \citep{he_deep_2015}, where the hypotheses ($h_\mathrm{MLP}$ and $h_\mathrm{RN}$) satisfy Eq.~\ref{eq:steering_condition}:
$$
    \mathrm{AS}_{\mathrm{MLP}}(h_\mathrm{MLP}) > \mathrm{AS}_{\mathrm{MLP}}(h_\mathrm{RN}), \quad \mathrm{AS}_{\mathrm{RN}}(h_\mathrm{RN}) > \mathrm{AS}_{\mathrm{RN}}(h_\mathrm{MLP}),
$$
For example, this means the MLP hypothesis $h_\mathrm{MLP}$ has a high AS when training MLPs but lower AS when training ResNet18s.
Also, we ensure these two hypotheses have higher AS than the true hypothesis $h^\star$ to make sure that they are able to act as a spurious hypothesis.

\item \textbf{Step 2 (Constructing training data where $h^\star$, $h_\mathrm{MLP}$ and $h_\mathrm{RN}$ completely correlates)}: As presented in Tab.~\ref{tab:appendix_steering_data}, in $D_t$ row, we select the data points as training set $D_t$ such that $h^\star$, $h_\mathrm{MLP}$ and $h_\mathrm{RN}$ agree. 
As shown in \citep{atanov_task_2022} by adversarial splits, when two hypotheses correlate with each other (i.e., their labels are the same on training data), a neural network tends to converge to the hypothesis with higher AS. 
Thus, combining with the conditions in step 1 (i.e., Eq.~\ref{eq:steering_condition}), training MLP on $D_t$ with ERM should converge to $h_\mathrm{MLP}$ and training ResNet on $D_t$ with ERM to $h_\mathrm{RN}$, which is illustrated in Tab.~\ref{tab:steering}-Right.  

\item \textbf{Step 3 (Further improving the alignment between the two hypotheses and their corresponding architectural inductive biases)}: This step is not necessary in general, but it allows us to find hypotheses that are better aligned with the inductive biases of the network. This is because the Task Discovery framework from \citep{atanov_task_2022} might not provide globally optimal hypotheses. The improvement goes as follows: we update $h_\mathrm{MLP}$ and $h_\mathrm{RN}$ to further increase their AS for a better alignment with the corresponding learning algorithm (i.e., MLP and $h_\mathrm{MLP}$, ResNet18 and $h_\mathrm{RN}$). Specifically, we train with ERM an MLP and ResNet18 on $D_t$ and make predictions on all CIFAR-10 data except for the training data i.e. $D \setminus D_t$. We replace the old labels of $h_\mathrm{MLP}$ and $h_\mathrm{RN}$ on $D \setminus D_t$ by the new labels predicted by MLP and ResNet18.
This step gives us higher AS hypotheses (thus more preferred by the given architecture) that satisfy Eq.~\ref{eq:steering_condition} (equation also shown in Step 1).

\item \textbf{Step 4 (Constructing unlabeled OOD data such that ResNet18 or MLP fails)}: As shown in Tab.~\ref{tab:appendix_steering_data}, in $\Dunlabeled$ row, we select data points with specific hypothesis labels as the unlabeled OOD data. 
By design, in Tab.~\ref{tab:appendix_steering_data}-Left, $h^\star$ is inversely correlated to $h_\mathrm{MLP}$ and is not correlated (i.e. balanced) to $h_\mathrm{RN}$. We know training an MLP with ERM on $D_t$ will choose $h_\mathrm{MLP}$. Therefore, D-BAT will perform well (i.e., find $h^\star$) by minimizing its diversification loss. 
On the contrary, training a ResNet with ERM on $D_t$ will choose $h_\mathrm{RN}$. Therefore, as shown in Sec.~\ref{sec:intro-spurious_ratio}, D-BAT cannot perform well by minimizing its diversification loss. 
The opposite conclusion holds for  Tab.~\ref{tab:appendix_steering_data}-Right.

\item \textbf{Step 5}: we take $D_t$ and $\Dunlabeled$ (which are around 12k and 24k images, respectively) and run D-BAT \citep{pagliardini_agree_2023} (the labels on $\Dunlabeled$ are inaccessible), and measure the test accuracy on hold-out $D\ood \sim \Dunlabeled$, which is shown in Tab.~\ref{tab:steering}-Left.
 
\end{itemize}

\begin{table}[t]
    \centering
    \resizebox{.28\textwidth}{!}{%
    \begin{tabular}{cccc}
    \toprule
                        & $h^\star$ & $h_\mathrm{MLP}$ & $h_\mathrm{RN}$ \\ \midrule
    \multirow{2}{*}{$D_t$} & 0  & 0  & 0  \\
                        & 1  & 1  & 1  \\ \midrule
    \multirow{4}{*}{$\Dunlabeled$} & 0  & 1  & 0  \\
                        & 0  & 1  & 1  \\
                        & 1  & 0  & 0  \\
                        & 1  & 0  & 1  \\  \midrule
    \end{tabular}
    }
    \quad
    \resizebox{.28\textwidth}{!}{%
    \begin{tabular}{cccc}
    \toprule
                        & $h^\star$ & $h_\mathrm{MLP}$ & $h_\mathrm{RN}$ \\ \midrule
    \multirow{2}{*}{$D_t$} & 0  & 0  & 0  \\
                        & 1  & 1  & 1  \\ \midrule
    \multirow{4}{*}{$\Dunlabeled$} & 0  & 0  & 1  \\
                        & 0  & 1  & 1  \\
                        & 1  & 0  & 0  \\
                        & 1  & 1  & 0  \\  \midrule
    \end{tabular}
    }
    \vspace{0.5em}
    \caption{\textbf{The rules for constructing the data in Sec.~\ref{sec:effects_of_arch_biases}}, covering the two cases in Tab.~\ref{tab:steering}. \textbf{Left}: $h^\star\perp h_\mathrm{MLP}$ on $\Dunlabeled$. \textbf{Right}: $h^\star\perp h_\mathrm{RN}$ on $\Dunlabeled$. `$\perp$' means inversely correlated on the unlabeled OOD data $\Dunlabeled$.}
    \vspace{-1.25em}
    \label{tab:appendix_steering_data}
\end{table}

The construction process is thus a \textit{white-box} process (or attack), similar to adversarial attacks, but on the architectural inductive bias aspect. 
We reiterate that the purpose of this example is to illustrate that the choice of architectural inductive bias can have a very drastic influence on the behavior of diversification methods and this choice is co-dependent on the properties of the unlabeled OOD data. 


Additionally, we can demonstrate this co-dependence in a \quotes{fine-grained} manner, as shown in Fig.~\ref{fig:smooth_transition}. 
Here we still keep the training data the same (according to $D_t$ in Tab.~\ref{tab:appendix_steering_data}), and construct the unlabeled OOD data $\Dunlabeled$ such that the spurious ratios of $h_\mathrm{MLP}$ and $h_\mathrm{RN}$ gradually switch from low to high or high to low. 
Hence, the Tab.~\ref{tab:appendix_steering_data}-Left and the Tab.~\ref{tab:appendix_steering_data}-Right correspond to the left and right extremities of the x-axis of Fig.~\ref{fig:smooth_transition}, and in between are interpolations between the two distributions (spurious ratios). 
This gives a better view of how one inductive bias gets favorable through the transitions of spurious ratios, and vice versa.

\begin{table}[!ht]
\centering
\resizebox{0.4\textwidth}{!} {%
\begin{tabular}{ccc}
        \toprule
        $\Dunlabeled$ & $\mathcal{A}$ & Test Acc.(\%) \\ \midrule
        \multirow{2}{*}{$h^\star\perp h_\mathrm{ViT}$} & ViT & 76.4$_{\pm 0.6}$             \\
         & ResNet18  & 58.1$_{\pm 0.4}$           \\\midrule
        \multirow{2}{*}{$h^\star\perp h_\mathrm{RN}$} & ViT  & 68.1$_{\pm 0.3}$            \\
         & ResNet18  & 79.0$_{\pm 0.5}$         \\\bottomrule
\end{tabular}
}
\vspace{1em}
\caption{\textbf{Tab.~\ref{tab:steering} with another architecture pair.} ViT far exceeds ResNet18 when $h^\star  \perp h_\text{ViT}$ and vice versa. Results are
averaged over 3 seeds.}
\label{tab:steering_vit}
\end{table}

\begin{figure}[!ht]
    \centering
    \includegraphics[width=0.55\textwidth]{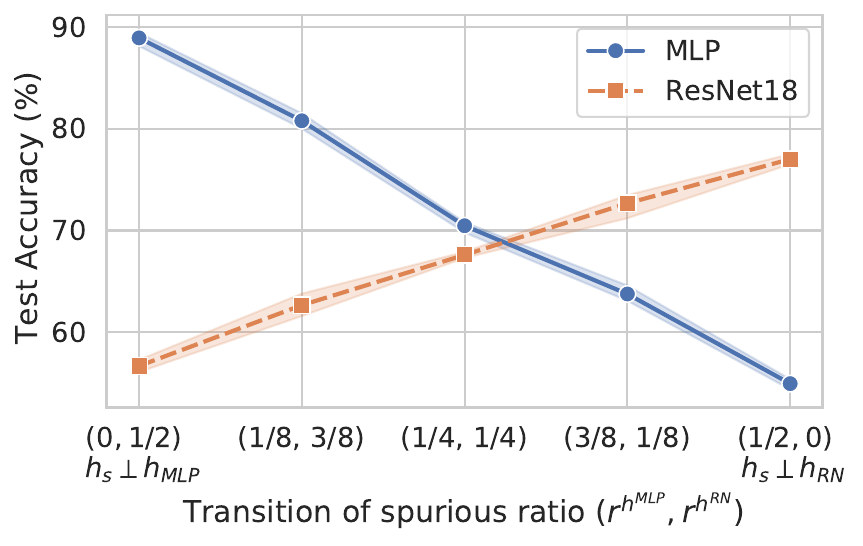}
    \caption{\textbf{The favorable inductive bias changes over transitions of spurious ratios}. As the unlabeled OOD data $D^U_\mathrm{ood}$ changes, the optimal inductive bias switches from MLP to Resnet18, indicating their co-dependence. Results are
averaged over 3 seeds.}
    \label{fig:smooth_transition}
\end{figure}

\end{document}